\documentclass[11pt]{article}

\usepackage[a4paper,margin=0.8in]{geometry}
\usepackage{amsthm,mathtools,amssymb}
\usepackage[numbers,sort&compress]{natbib}
\usepackage{tabularx}
\usepackage{booktabs}
\usepackage{graphicx}
\usepackage{microtype}
\usepackage{xurl}
\usepackage{hyperref}

\DeclareRobustCommand{\CE}{\ifmmode\mathrm{CE}\else\textnormal{CE}\fi}
\DeclareRobustCommand{\SOL}{\ifmmode\mathrm{SOL}\else\textnormal{SOL}\fi}
\DeclareRobustCommand{\WSOL}{\ifmmode\mathrm{wSOL}\else\textnormal{wSOL}\fi}

\newcommand{\E}{\mathbb{E}}
\newcommand{\ind}{\mathbf{1}}

\newtheorem{definition}{Definition}
\newtheorem{proposition}{Proposition}

\title{Weighted Score-Oriented Losses for\\
Temporally Localized Event Prediction}

\author{
Edoardo Legnaro\thanks{Corresponding author. Email: \texttt{edoardo.legnaro@edu.unige.it}}\\
Department of Mathematics, University of Genoa\\
Via Dodecaneso 35, 16146 Genoa, Italy
\and
Sabrina Guastavino\\
Department of Mathematics, University of Genoa\\
Via Dodecaneso 35, 16146 Genoa, Italy\\
\texttt{sabrina.guastavino@unige.it}
\and
Francesco Marchetti\\
Department of Mathematics ``Tullio Levi-Civita'', University of Padua\\
Via Trieste 63, 35121 Padua, Italy\\
\texttt{francesco.marchetti@math.unipd.it}
}

\date{}

\begin{document}
\maketitle

\begin{abstract}
Operational event-detection systems are rarely assessed by pointwise accuracy alone.
In anomaly detection, changepoint detection, and warning systems, the utility of an
alarm depends on its temporal position relative to an event. This produces a
score--loss mismatch. Neural networks are commonly trained with classical loss
functions, such as cross-entropy, whereas deployment decisions are obtained by
thresholding network predictions, merging alarms through post-processing rules,
and evaluating them with event-based metrics defined by detection windows and
false-alarm costs.
This paper studies a temporally localized specialization of weighted
score-oriented loss (\WSOL) for event prediction.  Starting from score-oriented
losses based on expected confusion matrices and from the weighted \SOL\
framework of \cite{Marchetti2024}, we
consider temporal weights that discount near-event false positives
and reduce false-negative penalties when an event is preceded by an admissible
alarm. The resulting objective is differentiable with respect to the network
predictions, and therefore can be optimized by back-propagation. It can be
instantiated with balanced accuracy, true skill statistic, F1,
critical success index, and related confusion-matrix scores. 
We evaluate the proposed approach by comparing cross-entropy, unweighted
score-oriented loss, and \WSOL\ on three benchmark datasets for time-series
event prediction and detection.
The results show that \WSOL\ can improve performance when the evaluation utility is localized in time and is not already encoded by the pointwise labels.
\end{abstract}

\noindent\textbf{Keywords:}\\
score-oriented loss; temporal weighting; event detection; anomaly detection;
changepoint detection; metric-aware learning

\section{Introduction}
Binary neural classifiers are usually optimized by minimizing a decomposable loss,
most commonly cross-entropy. Here, decomposable means that the training objective
is a sum or average of terms, each depending only on the local
prediction and label at one index. The final system, however, is often evaluated
through non-decomposable performance metrics computed after thresholding,
where the metric value depends jointly on
multiple predictions through global confusion-matrix counts and, in event-based settings, on additional post-processing rules that couple neighboring predictions.
In event-based time-series applications, this
discrepancy is particularly severe. A detector may be credited for firing shortly
before an anomalous window, penalized for repeated alarms, or judged according to a
range-level score, not according to the number of correctly classified time
points. Two prediction batches with identical pointwise confusion matrices may thus have very different operational value.

The mismatch between a training loss and an evaluation metric has been studied in
several forms. Classical cost-sensitive learning introduces asymmetric penalties
for false positives and false negatives \cite{Elkan2001}. This handles unequal
error costs but does not by itself model the position of an error in a temporal
sequence. Losses designed for class imbalance, including focal loss
\cite{Lin2017} and class-balanced reweighting \cite{Cui2019}, similarly modify
the relative contribution of samples but remain pointwise objectives.

Another line of work studies direct optimization of non-decomposable metrics.
Consistency and surrogate-regret analyses for F-like and other generalized
classification scores were developed in
\cite{Narasimhan2014NonDecomposable,Kar2014OnlineNonDecomposable,Kotlowski2016SurrogateRegret,Bao2020LinearFractional},
while scalable differentiable formulations include \cite{Eban2017}. A related
overlap-based relaxation is the Lovasz-Softmax loss \cite{Berman2018}.
These methods share the goal of aligning training with evaluation, but they are
usually formulated for independent binary classification or spatial overlap and
do not encode localized temporal utility around an event boundary.

In early warning and early event prediction, temporal structure has been
incorporated through precision-constrained learning
\cite{Rath2022MinPrecision}, temporal label shaping
\cite{Yeche2023TemporalLabelSmoothing}, and survival-style alarm policies
\cite{Yeche2024DSA}. Time-series anomaly detection has likewise developed
increasingly event-aware evaluation protocols, including Numenta Anomaly Benchmark (NAB)
\cite{Ahmad2017NAB,NABZenodo2017}, range-based precision and recall
\cite{Tatbul2018PrecisionRecall}, local evaluation
\cite{Huet2022LocalEvaluation}, threshold-independent range measures such as
Volume under the surface (VUS) \cite{Paparrizos2022VUS}, and broader taxonomies of metric behavior
\cite{Sorbo2024MetricMaze}; see also \cite{Doshi2022RewardOnce} for a critique
of inflated adjusted event scoring. These works sharpen the evaluation side of
the problem, but they do not turn the temporal reward
structure itself into a differentiable training objective.

Weighted event losses have also appeared in adjacent detection domains.
Examples include class-imbalance and multitask weighting for rare audio event
detection \cite{Phan2018RareAudioWeighted} and onset/offset-weighted
cross-entropy for sound event detection \cite{Song2024OnsetOffsetWeighted}.
Such losses inject temporal emphasis through pointwise reweighting, but they
are not derived from a confusion-matrix score family.

Score-oriented losses (\SOL) were introduced to reduce the mismatch between
training objectives and threshold-based evaluation metrics by replacing the
discontinuous hard confusion matrix with its expectation under a threshold
distribution \cite{Marchetti2022SOL}. In this way, scores such as balanced
accuracy, F1, true skill statistic, and critical success index can be turned into
differentiable loss functions. This idea was later extended in
\cite{Marchetti2024} through a general framework for weighted classification
metrics, showing that weighted score-oriented losses encompass cost-sensitive
learning, weighted cross-entropy, and value-weighted skill scores
\cite{Guastavino2021ValueWeighted}. 

The present work builds on this framework by deriving and testing a temporally
weighted score-oriented loss (\WSOL) for event detectors whose downstream
utility is concentrated in a reward window around an event.  The contribution is
a focused specialization to temporally localized event prediction: localized
event utility is written directly in expected confusion-matrix terms,
reformulated in event-detection language, and assessed empirically on anomaly-
and changepoint-detection benchmarks.  This is the
setting in which the loss contains information that is absent from ordinary
pointwise labels. 

Indeed, time-series anomaly detection is commonly evaluated with event- or range-aware
metrics rather than pure pointwise accuracy.  The NAB
uses application profiles that reward early detections inside labeled anomaly
windows and penalize false alarms \cite{Ahmad2017NAB,NABZenodo2017}.  More
general range-based precision and recall measures were proposed by Tatbul et
al. \cite{Tatbul2018PrecisionRecall}.  Value-weighted skill scores similarly
distinguish temporally useful errors from harmful errors in forecasting problems
\cite{Guastavino2021ValueWeighted,guastavino2022prediction}. These evaluation protocols motivate a
training loss that can see the temporal reward structure during optimization
instead of only after threshold tuning.

The paper is organized as follows.  Section~\ref{sec:wsol} formalizes the
temporally weighted score-oriented loss: starting from the expected
confusion-matrix framework of \SOL, we introduce the future-event proximity
weight and the prior-alarm correction terms, define the resulting \WSOL\
objective, and verify that it reduces to \SOL\ when no temporal weighting is
applied.  Section~\ref{sec:empirical} describes the experimental setup, including the
three benchmark datasets (NAB, SKAB, Exathlon), the common Temporal Convolutional Network (TCN) architecture,
and the training and evaluation pipeline.  Section~\ref{sec:results} reports
the held-out comparisons.  Section~\ref{sec:discussion} interprets the results
across datasets, analyzes the conditions under which temporal weighting helps,
and discusses the role of temporal weight selection and the limitations of the current
formulation.  Section~\ref{sec:conclusion} summarizes the findings and outlines
directions for future work.
The implementation of \WSOL\ is available at
\url{https://github.com/edoardolegnaro/ScoreOrientedLosses.git}.

\section{Weighted Score-Oriented Loss}
\label{sec:wsol}
Let \(\{(x_i,y_i)\}_{i=1}^n\) be a time-ordered binary dataset of \(n\) samples, with input sample \(x_i\in\Omega\subset \mathbb{R}^d\) and
\(y_i\in\{0,1\}\). A neural network with parameters \(\theta\), denoted by $f_{\theta}$, produces a
prediction
$p_i=f_\theta(x_i)\in(0,1)$.  A threshold \(\tau\in(0,1)\) gives the hard
prediction
$\hat y_i(\tau)=\ind\{p_i\geq \tau\},$
where $\ind\{\cdot\}$ denotes the indicator function.
The classical confusion-matrix entries are
\begin{align}
  \mathrm{TP}_\tau &= \sum_{i=1}^n y_i \ind\{p_i\geq \tau\}, &
  \mathrm{FN}_\tau &= \sum_{i=1}^n y_i \ind\{p_i< \tau\}, \\
  \mathrm{FP}_\tau &= \sum_{i=1}^n (1-y_i)\ind\{p_i\geq \tau\}, &
  \mathrm{TN}_\tau &= \sum_{i=1}^n (1-y_i)\ind\{p_i< \tau\}.
\end{align}
A given score can be written as
$s_\tau = s(\mathrm{TN}_\tau,\mathrm{FP}_\tau,\mathrm{FN}_\tau,\mathrm{TP}_\tau),$ where $s$ denotes a classification performance or skill score defined in terms of the confusion matrix.
Examples used in the experiments include balanced accuracy, which ranges in \([0,1]\) and attains its optimum value at \(1\), and the true skill statistic, which ranges in \([-1,1]\) and is optimal when it reaches \(1\):
\begin{equation}
    \mathrm{BA}
  = \frac{1}{2}\left(
    \frac{\mathrm{TP}}{\mathrm{TP}+\mathrm{FN}}
    +
    \frac{\mathrm{TN}}{\mathrm{TN}+\mathrm{FP}}
  \right), \quad   \mathrm{TSS}
  = \frac{\mathrm{TP}}{\mathrm{TP}+\mathrm{FN}}
  +
  \frac{\mathrm{TN}}{\mathrm{TN}+\mathrm{FP}}
  - 1.
\end{equation}

These skill scores are particularly suitable for imbalanced classification problems, and they are directly related through \(\mathrm{BA} = \frac{1}{2}(\mathrm{TSS} + 1)\).
The score \(s_\tau\) is discontinuous with respect to \(p_i\), and therefore in the neural-network parameters.

We replace the fixed threshold with a random threshold variable $T$ with cumulative distribution function $F$ on $(0,1)$.
Let $z_i = F(p_i).$
Since \(\mathbb{P}(T\leq p_i)=F(p_i)\), the expected confusion-matrix entries are
\begin{align}
  \mathrm{TP}_F &= \E_T[\mathrm{TP}_T] = \sum_{i=1}^n y_i z_i, &
  \mathrm{FN}_F &= \E_T[\mathrm{FN}_T] = \sum_{i=1}^n y_i(1-z_i), \\
  \mathrm{FP}_F &= \E_T[\mathrm{FP}_T] = \sum_{i=1}^n (1-y_i)z_i, &
  \mathrm{TN}_F &= \E_T[\mathrm{TN}_T] = \sum_{i=1}^n (1-y_i)(1-z_i).
\end{align}
For a score \(s\), the score-oriented loss is
\begin{equation}
  \mathcal{L}_{\SOL}(\theta)= 1 - 
  s\!\left(\mathrm{TN}_F,\mathrm{FP}_F,\mathrm{FN}_F,\mathrm{TP}_F\right).
  \label{eq:sol}
\end{equation}
When \(F\) is smooth, \(\mathcal{L}_{\SOL}\) is differentiable wherever the
chosen score is differentiable.

Event-detection metrics often reward alarms in a neighborhood of an event, not only at positively labeled time points.  Let \(H\) be a temporal horizon and let
\(\omega=(\omega_1,\ldots,\omega_H)\) be a non-negative utility weight vector, with \(\omega_h\in [0,1)\) for each \(h\in\{1,\dots,H\}\).  In the
one-sided formulation below, \(h\) indexes the distance from an alarm to a
future event, or equivalently the lag of a prior alarm relative to an event.
Larger \(\omega_h\) means that an alarm at that lag is more useful.
In the experiments, \(\omega_h\) is non-increasing and concentrated near the
event boundary.  This construction can be read as a temporally localized,
event-prediction specialization of the chronological value-weighted \WSOL\
setting already formalized in \cite{Marchetti2024}, rewritten here in direct
event-utility notation.

Two effects are encoded.  First, a positive prediction at a negative time point should not be penalized as a full false alarm if an event occurs shortly afterward.  Define the future-event proximity
\begin{equation}
  a_i^+ = \max_{1\leq h\leq H}\omega_h y_{i+h},
  \label{eq:future-proximity}
\end{equation}
where \(y_{j}=0\) when the index \(j\) falls outside the range of the considered sequence.  The weighted expected false-positive term is
\begin{equation}
  \mathrm{FP}_{F, \omega}
  =
  \sum_{i=1}^n (1-y_i)(1-a_i^+)z_i.
  \label{eq:wfp}
\end{equation}
Thus an alarm immediately before an event is discounted when the target metric
would treat it as useful.

Second, an event should not be treated as fully missed if the model has already
raised a strong nearby alarm.  Let \(z_{i-h}=0\) for \(i-h<1\).  A product-style
temporal correction is
\begin{equation}
  c_i^{\mathrm{prod}}
  =
  \sum_{h=1}^H \omega_h [z_{i-h}-z_i]_+,
  \label{eq:prod-correction}
\end{equation}
where \([u]_+=\max(u,0)\).  A max-style correction, used in the main experiments,
keeps only the upper envelope of previous alarms:
\begin{equation}
  c_i^{\mathrm{max}}
  =
  \sum_{h\in \mathcal{R}_i}
  (\omega_h-\omega_{h^+})[z_{i-h}-z_i]_+ .
  \label{eq:max-correction}
\end{equation}
The set \(\mathcal{R}_i\) is defined by scanning backward from time \(i\).  A lag
\(h\) is included in \(\mathcal{R}_i\) only if the soft alarm value \(z_{i-h}\) is
strictly larger than all soft alarm values at closer lags \(1,\ldots,h-1\).  Thus
\(\mathcal{R}_i\) keeps only the successive record-high previous alarms: weaker
alarms that are dominated by a closer alarm cannot add extra credit for the same
event.  If \(\mathcal{R}_i=\{h_1<h_2<\cdots<h_m\}\), then \(h_j^+=h_{j+1}\) for
\(j<m\), and for the last retained lag we set \(\omega_{h_m^+}=0\).  The factor
\(\omega_{h_j}-\omega_{h_j^+}\) assigns only the incremental temporal weight
between two consecutive record alarms.  This running-maximum decomposition is a
piecewise differentiable surrogate for giving credit to the strongest useful prior alarm
without repeatedly rewarding the same event.

The weighted expected false-negative term is then
\begin{equation}
  \mathrm{FN}_{F, \omega}
  =
  \sum_{i=1}^n y_i \left(1-z_i-c_i\right),
  \label{eq:wfn}
\end{equation}
where \(c_i\) is either \eqref{eq:prod-correction} or \eqref{eq:max-correction}.
The true-positive and true-negative terms remain
\[
  \mathrm{TP}_{F,\omega}=\sum_{i=1}^n y_i z_i,\qquad
  \mathrm{TN}_{F,\omega}=\sum_{i=1}^n (1-y_i)(1-z_i).
\]

\begin{definition}[Temporal weighted score-oriented loss]
Given a confusion-matrix score \(s\), a cumulative distribution function \(F\) for a random threshold \(T\), a horizon
\(H\), and a temporal weight vector \(\omega\), the temporal weighted score-oriented
loss is
\begin{equation}
  \mathcal{L}_{\WSOL}(\theta)
  =
  1-
  s\!\left(
    \mathrm{TN}_{F,\omega},
    \mathrm{FP}_{F,\omega},
    \mathrm{FN}_{F,\omega},
    \mathrm{TP}_{F,\omega}
  \right).
  \label{eq:wsol}
\end{equation}
\end{definition}

When the temporal weights are zero (or no horizon is used), the weighted loss reduces to the standard score-oriented loss.

\begin{proposition}
If \(H=0\) or \(\omega_h=0\) for all \(h\), then
\(\mathcal{L}_{\WSOL}=\mathcal{L}_{\SOL}\).
\end{proposition}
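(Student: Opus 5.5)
The plan is to show that, under either hypothesis, each of the four weighted expected confusion-matrix entries equals its unweighted counterpart. The claim then follows immediately from \eqref{eq:sol} and \eqref{eq:wsol}, since both losses apply the same map \(1-s(\cdot)\) to these four entries. The true-positive and true-negative terms need no argument: \(\mathrm{TP}_{F,\omega}=\sum_i y_i z_i=\mathrm{TP}_F\) and \(\mathrm{TN}_{F,\omega}=\sum_i(1-y_i)(1-z_i)=\mathrm{TN}_F\) by definition. The work is therefore confined to \(\mathrm{FP}_{F,\omega}\) and \(\mathrm{FN}_{F,\omega}\).

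\emph{False positives.} I will show that \(a_i^+=0\) for every \(i\). If \(\omega_h=0\) for all \(h\), each term \(\omega_h y_{i+h}\) in \eqref{eq:future-proximity} vanishes, so the maximum is \(0\). If \(H=0\), the index set \(\{1\le h\le H\}\) is empty. In that case I will adopt the natural convention, consistent with the remark preceding the proposition that ``no horizon is used'', that the maximum over an empty set of non-negative terms is \(0\), meaning no future-event discount. With \(a_i^+=0\), \eqref{eq:wfp} reduces to \(\sum_i(1-y_i)z_i=\mathrm{FP}_F\).

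\emph{False negatives.} I will show that \(c_i=0\) for both choices of correction. For \(c_i^{\mathrm{prod}}\) in \eqref{eq:prod-correction}, the sum is empty when \(H=0\), and every summand has the factor \(\omega_h=0\) otherwise. For \(c_i^{\mathrm{max}}\) in \eqref{eq:max-correction}, the record set \(\mathcal{R}_i\subseteq\{1,\dots,H\}\) is empty when \(H=0\). When all \(\omega_h=0\), each coefficient \(\omega_{h_j}-\omega_{h_j^+}\) is a difference of zeros; for the last retained lag this relies on the convention \(\omega_{h_m^+}=0\). Every summand therefore vanishes irrespective of the values \([z_{i-h}-z_i]_+\). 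Hence \eqref{eq:wfn} gives \(\sum_i y_i(1-z_i)=\mathrm{FN}_F\).

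There is no real obstacle in this argument. The only delicate point is the degenerate case \(H=0\), where the vector \(\omega\) is empty and the maximum in \eqref{eq:future-proximity} must be interpreted by convention as \(0\). I will state this convention explicitly, noting that it is the interpretation under which ``no horizon'' means no temporal reweighting.
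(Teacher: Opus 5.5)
Your proposal is correct and follows the paper's route: show \(a_i^+=0\) and \(c_i=0\) so that the false-positive and false-negative terms reduce to their unweighted forms, note that the true-positive and true-negative terms are unchanged, and substitute into the loss. Your explicit handling of the empty maximum when \(H=0\), and of the convention \(\omega_{h_m^+}=0\) in the max-style correction, is slightly more careful than the paper, which simply asserts that both quantities vanish.
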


\begin{proof}
With zero temporal weights, \(a_i^+=0\) and \(c_i=0\) for all \(i\).  Equations
\eqref{eq:wfp} and \eqref{eq:wfn} reduce to the unweighted expected
false-positive and false-negative terms, while the true-positive and
true-negative terms are unchanged.  Substitution into \eqref{eq:wsol} gives
\eqref{eq:sol}.
\end{proof}

Temporal weighting supplies information that is not present in ordinary pointwise supervision.  It should therefore be useful when the evaluation metric rewards alarms in a localized window around an event and the pointwise labels do not already encode that temporal structure.
If positive labels already cover the full region in which alarms are useful, then cross-entropy receives a dense and aligned training signal.  If anomalous labels span long disturbance ranges, a localized temporal weight vector may even conflict with the range-level evaluation metric.
In the following section we empirically test this distinction.

\section{Empirical Validation}
\label{sec:empirical}

In this section, we evaluate the proposed \WSOL\ loss \eqref{eq:wsol} against
cross-entropy and the unweighted \SOL\ loss \eqref{eq:sol}.  The comparison uses
three time-series benchmarks that differ in whether their evaluation utility is
temporally localized.

All experiments compare three losses under the same model architecture and data pipeline within each dataset.
For \SOL\ and \WSOL, the internal score family is selected from balanced accuracy and TSS, with a uniform threshold distribution.
For \WSOL, the temporal weight vector and horizon are selected from a finite
set of candidate families fixed before model selection.  All final comparisons
use validation-selected loss configurations, thresholds, and post-processing
parameters, and the selected pipeline is then evaluated once on the held-out split.
For NAB, these candidates are specified from the official positive utility
branch, along with one short-horizon control.
Specifically, we construct temporal weight vectors that assign higher weights to
predictions close to an event and progressively lower weights at larger temporal
distances.  This approximates the utility structure induced by the NAB scoring
function.
Let \(S(r)=2\sigma(-5r)-1\) denote the scaled sigmoid function underlying the
positive branch of the NAB utility profile, where \(\sigma(\cdot)\) is the
standard sigmoid function.
Here, \(r \in [-1,1]\) is a normalized coordinate along that branch, ordered
from the highest-reward side (\(r=-1\)) to the low-reward side (\(r=1\)).
Let \(r_h=-1+2(h-1)/(H-1)\) define a discretization of this normalized temporal
window, so \(h=1\) receives the largest temporal credit. We define
\begin{equation}\label{eq:u_h}
  u_h = \frac{\max(S(r_h),0)}{S(-1)},
\end{equation}
which represents a normalized and non-negative approximation of the NAB utility profile, rescaled such that its maximum value equals one.
The NAB-shaped temporal weight vectors are then defined as
\begin{equation}
\omega_h = b_H + (a_H - b_H) u_h^{\gamma_H},
\end{equation}
where \(H \in \{8,16,32,64\}\) controls the temporal resolution of the weight
vector, and the parameters \((a_H, b_H, \gamma_H)\) determine its amplitude,
baseline, and sharpness.  In the experiments, the following horizon-specific
parameter triples are considered:
\[
(a_H,b_H,\gamma_H)\in
\{(0.55,0.04,2.5),(0.50,0.025,3.0),(0.42,0.012,4.0),(0.34,0.006,5.0)\}.
\]
In addition, NAB includes a short-horizon control with \(H=4\) and
\(\omega=(0.45,0.20,0.10,0.05)\).  These functional forms and finite candidate
sets are specified a priori; validation selects among them but does not learn
the temporal weights continuously.

SKAB uses the same one-sided decaying family of temporal weight vectors with
\(H\in\{8,16,32,64\}\) to match its local reward window.  Exathlon instead uses
front-loaded heuristic temporal weight vectors with
\(H\in\{8,16,32,64,128\}\), because its scoring function does not provide an
analogous localized one-sided reward function.  These vectors have the form
\[
  \omega_h=t_H+(p_H-t_H)
  \left(1-\frac{h-1}{H-1}\right)^{q_H},
\]
with horizon-specific parameters
\[
\begin{aligned}
(p_8,t_8,q_8)&=(0.58,0.12,1.6), &
(p_{16},t_{16},q_{16})&=(0.54,0.06,2.4),\\
(p_{32},t_{32},q_{32})&=(0.50,0.025,3.2), &
(p_{64},t_{64},q_{64})&=(0.44,0.010,4.2),\\
(p_{128},t_{128},q_{128})&=(0.38,0.004,5.4).
\end{aligned}
\]
Specifically, Exathlon uses the benchmark's range-based anomaly-detection (AD)
scores, which are F-scores built from range precision and range recall.  These
scores use increasingly restrictive definitions of what it means to cover an
anomalous interval: AD2 uses flat overlap credit over each labeled anomaly
range, AD3 adds a front-position bias that rewards earlier overlap within the
range, and AD4 further removes duplicate credit for repeatedly matching the same
range.  We use AD4 as the primary selection metric.

Table~\ref{tab:datasets} summarizes the benchmarks: NAB
\cite{Ahmad2017NAB,NABZenodo2017}, SKAB \cite{SKAB2020}, and Exathlon
\cite{Jacob2021Exathlon}.  NAB and SKAB are the primary empirical datasets because their evaluation functions reward localized alarms.
Exathlon is included as a boundary case: it is also an anomaly-detection
benchmark, but its labels span long disturbance ranges.  It therefore tests
whether the proposed loss helps beyond the mere presence of temporal structure.

\begin{table}[hbp]
\centering
\small
\begin{tabularx}{\textwidth}{p{1.5 cm}p{2 cm}X X}
\toprule
Dataset & Domain & Structure & Class imbalance \\
\midrule
NAB & Streaming anomaly detection & 58 univariate time series with labeled anomaly windows and application-specific scoring profiles. & Window-membership positives: 33{,}495 / 365{,}558 (9.16\%). \\
SKAB & Industrial changepoint detection & 35 multivariate sensor files with physical channels and changepoint/anomaly labels. & Changepoint positives: 129 / 37{,}401 (0.345\%). \\
Exathlon & Spark application anomaly detection & 81 readable traces after archive repair, 19 derived features, and 86 labeled anomaly ranges across 6 disturbance types. & Range-labeled: 86 anomaly ranges over 81 traces (range-level, not pointwise prevalence). \\
\bottomrule
\end{tabularx}
\caption{Datasets used in the validation.  The paper focuses on datasets whose
metrics expose temporal label--utility mismatch, with Exathlon retained as a
boundary case.}
\label{tab:datasets}
\end{table}

The imbalance profile differs sharply across benchmarks and helps interpret the
loss comparisons.  NAB has a moderate pointwise positive prevalence
(33{,}495/365{,}558, 9.16\%), whereas SKAB is strongly sparse at the point level
(129/37{,}401, 0.345\%), making changepoint positives much rarer during
optimization and thresholding.  Exathlon is reported differently: its labels are
annotated as anomaly ranges, and each trace is one full multivariate recording run.
Thus, 86 ranges over 81 traces means that one run can contain zero, one, or multiple labeled anomaly intervals; this is a range-level descriptor of event annotation density rather than pointwise class prevalence.
For this reason, the Exathlon imbalance entry is not numerically comparable to the NAB and SKAB percentages, and it should be interpreted in terms of interval coverage and range-based evaluation.
After the 15-second resampling used in our pipeline, Exathlon contains 5{,}704 anomalous samples out of 126{,}674 total samples, corresponding to 4.50\% anomaly coverage; equivalently, roughly 95.5\% of sampled trace time is non-anomalous.

Each benchmark uses a residual temporal convolutional network (TCN) with causal
one-dimensional convolutions, ReLU nonlinearities, dropout, and a final sigmoid
head.  Each residual block contains two causal convolutional layers with kernel
size 3 and dilation \(2^b\) in block \(b\), followed by a residual projection
when the channel dimension changes.  The dropout rate is 0.1 after each
convolution and after each hidden layer in the dense head.  NAB uses six
residual TCN blocks with 48 channels, sequence length 96, batch size 4, and
dense head widths 32 and 8.  SKAB and Exathlon use five residual TCN blocks with
32 channels, sequence length 120, batch size 2, and dense head widths 24 and 8.
All models are trained in PyTorch with Adam, learning rate \(10^{-4}\), mixed
bfloat16 precision, maximum 80 epochs for NAB and 60 epochs for SKAB and
Exathlon, and early stopping with patience 8.  Early stopping and checkpoint
selection maximize the validation metric used by the corresponding benchmark:
the standard NAB profile for NAB and SKAB, and AD4 for Exathlon, after
validation-set threshold and post-processing selection.
The same backbone configuration is used for \CE, \SOL, and \WSOL\ within each dataset.

For event metrics, the final predicted raw probabilities are converted to binary alarms using the threshold tuned on the validation set.
NAB and SKAB use NAB-style scoring profiles and local-maximum post-processing to evaluate event detection quality.
Local-maximum post-processing identifies peaks in the probability sequence: a predicted alarm is retained if its predicted probability is higher than its immediate neighbors, which reduces redundant nearby detections.
SKAB applies an additional refractory non-maximum-suppression window of 30 time steps: after a prediction peak is detected, any other peaks within the following 30 time steps are suppressed to avoid multiple alarms for the same underlying event.
Each dataset is evaluated across four outer folds and five random seeds, giving 20 split--seed comparisons per dataset.  The \CE\ baseline contributes one run per split--seed pair (20 total); \SOL\ is trained under two score-oriented configurations, yielding 40 runs.  \WSOL\ is trained once per temporal candidate family per split--seed pair: NAB and Exathlon use 10 candidate families (200 runs each), while SKAB uses 8 (160 runs).

\section{Results}
\label{sec:results}

Table~\ref{tab:headline} reports validation-selected held-out test performance, summarized as the mean and standard error of the evaluation metric over the 20 split--seed runs for each dataset.  

The two datasets whose utility is localized around event windows, NAB and SKAB, show positive mean \WSOL\ gains over \CE.  Exathlon shows the opposite behavior: although some individual weighted candidates improve over \SOL, validation-selected \WSOL\ remains well below \CE\ on the range-level objective.

\begin{table}[hbp]
\centering
\small
\resizebox{\textwidth}{!}{%
\begin{tabular}{p{2.0cm}p{2.4cm}cccc}
\toprule
Dataset & Primary metric & \CE & \SOL & \WSOL & \(\WSOL-\CE\) \\
\midrule
NAB & Standard profile & \(18.186 \pm 2.186\) & \(18.626 \pm 3.360\) & \(19.791 \pm 2.781\) & \(+1.605 \pm 2.730\) \\
SKAB & Standard profile & \(43.592 \pm 3.152\) & \(46.657 \pm 2.290\) & \(47.170 \pm 2.905\) & \(+3.578 \pm 2.317\) \\
Exathlon & AD4 & \(0.4200 \pm 0.0375\) & \(0.2708 \pm 0.0398\) & \(0.2615 \pm 0.0427\) & \(-0.1585 \pm 0.0394\) \\
\bottomrule
\end{tabular}
}
\caption{Validation-selected held-out comparisons on NAB, SKAB, and
Exathlon.  Values are means \(\pm\) standard errors over 20 split--seed
comparisons after selecting the corresponding loss configuration by validation
performance within each comparison.  The primary metrics are the NAB standard
profile, the SKAB standard profile, and Exathlon AD4; the \(\WSOL-\CE\) column
reports the paired test-score difference across the same selected
split-seed comparisons.}
\label{tab:headline}
\end{table}

SKAB is the cleanest positive setting.  Each series contains a localized changepoint, and the scoring profile rewards alarms in a short window around that change.  This is close to the mathematical setting of Section~\ref{sec:wsol}: pointwise labels
identify event regions, while the evaluation score gives graded temporal credit.

The validation-selected \WSOL\ mean score reaches 47.170, compared with 43.592 for
\CE\ and 46.657 for \SOL.  
This value is not obtained from one fixed temporal weight vector used everywhere: for
each split--seed comparison, the \WSOL\ configuration is selected by validation
performance and then evaluated on the held-out test split.  As a diagnostic, we
also examine what happens if the same window and weights are kept fixed while only
the fold and random seed change.  Under this stricter fixed-candidate view, the
strongest \WSOL\ family is the balanced-accuracy, max-aggregated SKAB-utility
weight vector with \(H=8\).  Using that same \(H=8\) weighting scheme across all 20
comparisons gives a lower mean test score, 46.336, but it is still above
\CE\ in 75\% of the runs.
Longer horizons remain positive on average against \CE\ but give
smaller or less stable gains.  The result is consistent with the intended use of
\WSOL: the useful weighting is local, and the incremental gain over \SOL\ is
present but more modest than the gain over cross-entropy.

\begin{figure}[t]
\centering
\includegraphics[width=0.95\textwidth]{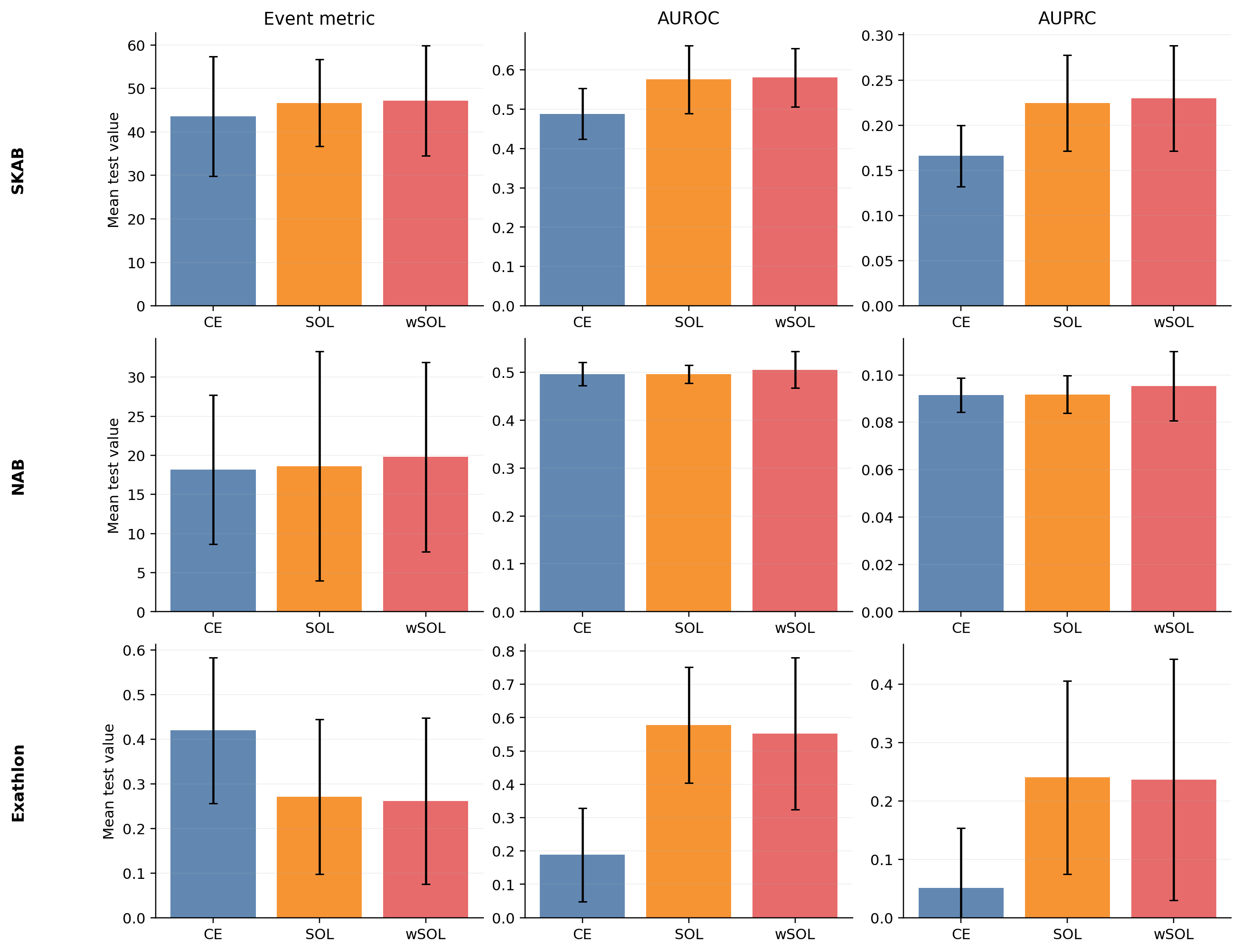}
\caption{Validation-selected held-out comparisons on SKAB, NAB, and
Exathlon.  Rows correspond to datasets, while columns report the event metric,
AUROC, and AUPRC.}
\label{fig:event-pointwise}
\end{figure}

In Figure~\ref{fig:event-pointwise}, the SKAB row shows that the mean gain is not
confined to the event utility.  \WSOL\ also improves the area under the receiver operating characteristic curve (AUROC) and the area under the precision--recall curve (AUPRC) over \CE\ and slightly over \SOL.  This
makes SKAB the cleanest positive case in the study: unlike NAB, where most of
the gain is concentrated on the event utility, and unlike Exathlon, where
ranking can improve while event utility worsens, SKAB shows aligned
improvements in both the deployment score and the pointwise ranking statistics.

Figure~\ref{fig:utility-alignment} makes this interpretation explicit. On SKAB,
the balanced-accuracy family peaks at the shortest tested horizon and then
weakens as the temporal horizon extends.  The same figure also shows the best
fixed temporal weight vectors across datasets: SKAB selects a short localized profile, NAB a
broader graded profile, and Exathlon a short front-loaded profile that still
fails to recover \CE\ on the target event metric. This comparison is not meant
to check whether the selected \(\omega\) exactly duplicates the benchmark
scorer at every time step. Instead, a well-matched surrogate should place weight
on the same side of the event, over a similar temporal span, and with a similar
increase or decay pattern.

\begin{figure}[t]
\centering
\includegraphics[width=0.95\textwidth]{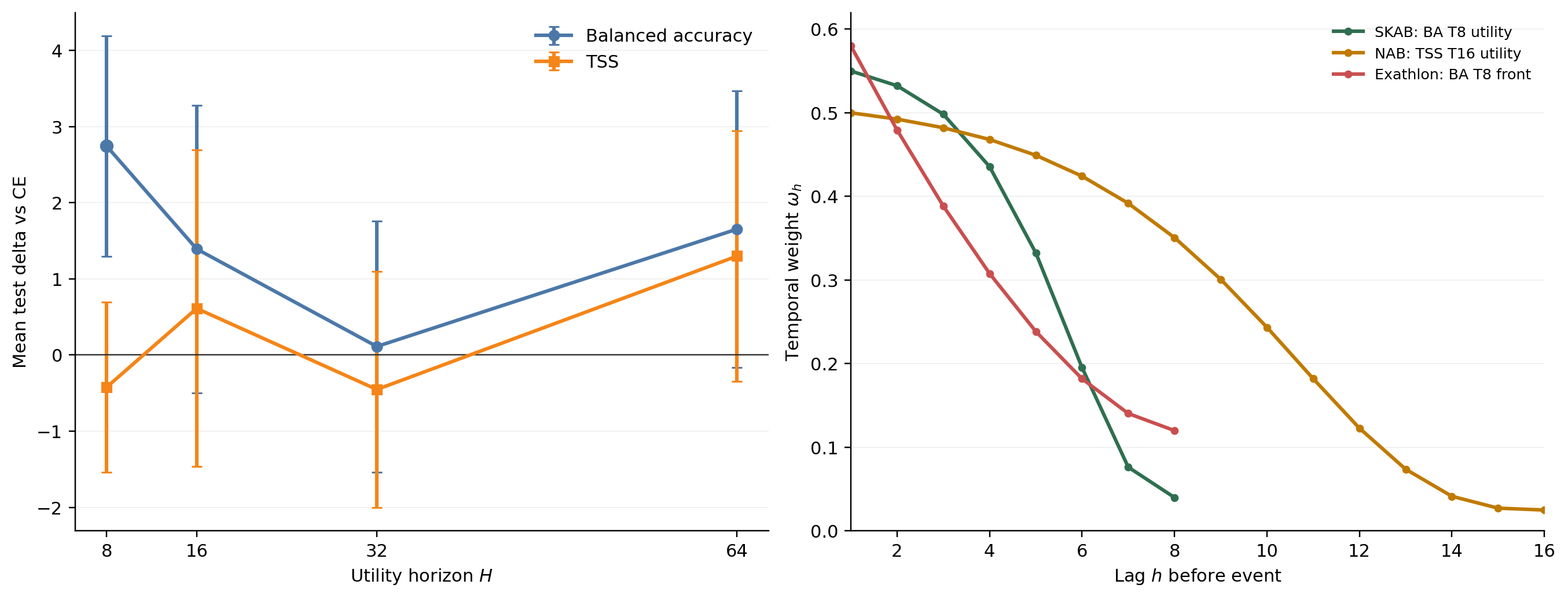}
\caption{Utility alignment in the selected candidates.  Left: SKAB candidate
landscape, shown as mean test delta versus \CE\ across utility horizons; the
balanced-accuracy family peaks at \(H=8\), while longer horizons give smaller
gains.  Error bars show standard errors across split-seed rows.  Right: best
fixed candidate temporal weight vectors across datasets.  SKAB selects a short localized
profile, NAB a longer NAB-shaped graded profile, and Exathlon a short
front-loaded heuristic profile that still does not recover \CE\ on AD4.  The
comparison is qualitative: these temporal weight vectors are surrogates for the benchmark
utilities, not exact copies of the benchmark scorers.}
\label{fig:utility-alignment}
\end{figure}

NAB gives a second positive case, but the result is less uniform.  
In Table~\ref{tab:headline}, the mean evaluation metric is 19.791 for \WSOL, 18.626 for \SOL, and 18.186 for
\CE. The highest mean test score among
the \WSOL\ candidate families is obtained by the TSS-based max-rule \WSOL\ with
\(H=16\). This configuration reaches 22.610 and is above \CE\ in 12 of 20 runs.

The NAB row of Figure~\ref{fig:event-pointwise} compares the event score with
pointwise AUROC and AUPRC.  The \WSOL\ gain is clearer on the event utility than
on the ranking metrics.  This supports the central mechanism: temporal
weighting changes how the probability sequence interacts with thresholding and
post-processing, rather than simply improving global pointwise discrimination.

\begin{figure}[t]
\centering
\includegraphics[width=0.95\textwidth]{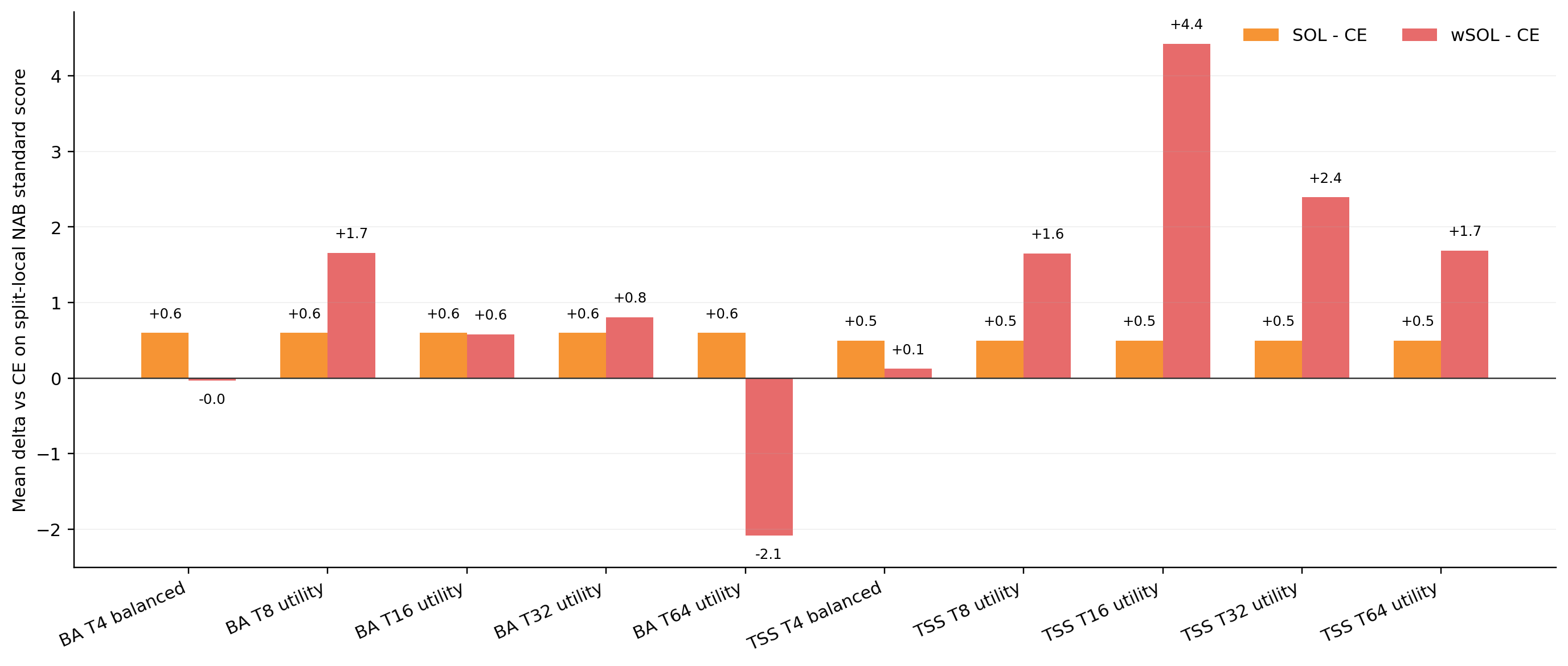}
\caption{NAB candidate-family comparison.  The bars report mean split-local NAB
standard-score deltas relative to \CE\ for the fixed candidate families.  The
\WSOL\ advantage is not uniform across temporal weight vectors: it is strongest
for the longer NAB-shaped horizons, especially under the TSS-based internal
score.  The \(T4\) ``balanced'' candidates are short-horizon controls from the
earlier sweep, whereas the \(T8\)--\(T64\) candidates use NAB-shaped utility
weights.}
\label{fig:nab-candidates}
\end{figure}

Figure~\ref{fig:nab-candidates} shows that temporal weighting is useful only when
the candidate family matches the benchmark profile.  Short or mismatched temporal weight vectors
can be neutral or negative.  This behavior is important for interpretation:
\WSOL\ is not merely a stronger optimizer; it is a way to insert a specific
utility model into the training objective.

Exathlon is an anomaly-detection benchmark, but its labels are structurally
different from NAB and SKAB.  Disturbance annotations cover extended ranges from
root-cause onset through downstream effects.  In this setting, a localized
temporal weight vector can be misaligned with AD4, the strictest Exathlon range score
used here, because AD4 rewards front-positioned range overlap without duplicate
credit for repeated detections of the same interval.

The validation-selected AD4 score is 0.4200 for \CE, 0.2708 for \SOL, and 0.2615
for \WSOL.  Even an oracle diagnostic that selects the best \WSOL\ candidate by
held-out AD4 remains below the
validation-selected \CE\ baseline.  At the same time, the Exathlon row of
Figure~\ref{fig:event-pointwise} shows that \SOL\ and \WSOL\ improve mean AUROC and
AUPRC relative to \CE.  This behavior is consistent with the design of \SOL\ and
\WSOL, which are constructed to optimize confusion-matrix-based metrics such as
BA and TSS.  These metrics are suitable for imbalanced classification and can
improve pointwise rankings, as reflected by AUROC and AUPRC.  However, better
pointwise ranking does not necessarily imply better event-level performance
under the AD4 evaluation protocol.

\section{Discussion}
\label{sec:discussion}

The results' pattern supports the theoretical hypothesis.  \WSOL\ helps when two
conditions hold: the metric rewards localized temporal behavior, and the pointwise
labels do not already provide that behavior as a dense training signal.  SKAB and
NAB satisfy these conditions and give positive mean differences.  Exathlon violates the
locality condition because anomalous labels span long disturbance ranges, and the
weighted objective does not recover the \CE\ baseline on the range-level score.

The experiments also clarify the role of \SOL.  Some benefit comes from replacing
cross-entropy with a score-oriented objective, independent of temporal weights.
This is expected in anomaly-detection benchmarks, where positive labels are rare
and plain cross-entropy can be dominated by the majority normal class.  A
score-oriented loss instead optimizes differentiable or piecewise differentiable
surrogates of skill scores,
which are better aligned with imbalanced classification than accuracy-like
pointwise metrics alone.
For this reason, \SOL\ is a necessary baseline.  On SKAB, \SOL\ already improves
substantially over \CE, while \WSOL\ adds a smaller further gain.  On NAB, the
weighted component is more visible in the best TSS-shaped families.  On Exathlon,
both score-oriented losses improve ranking metrics but not the selected event
metric.

The selected temporal weight vectors are themselves interpretable results, as shown in
Figure~\ref{fig:utility-alignment}.  In \WSOL, the candidate weights are not
learned network parameters but explicit hypotheses about deployment utility.
The relevant question is not whether the selected \(\omega\) is identical to
the benchmark utility function pointwise, because \WSOL\ inserts \(\omega\)
through the surrogate terms \eqref{eq:wfp}--\eqref{eq:wfn}.  Agreement is
structural: the selected temporal weight vector should match the temporal
neighborhood that receives credit, the time scale over which credit is spread, and the
abruptness or gradualness of the decay.

Under this criterion, the positive cases are coherent.  On SKAB, the strongest
fixed family uses a short, front-loaded max temporal weight vector, which agrees with a local
changepoint reward window concentrated near the event.  This is
consistent with \eqref{eq:wfp}--\eqref{eq:wfn}: larger early \(\omega_h\)
reduce the effective false-positive penalty through \(1-a_i^+\) and increase
the false-negative correction \(c_i\) only for nearby alarms, so a narrow
reward window should favor short horizons.  On NAB, the strongest family shifts
to a longer NAB-shaped TSS temporal weight vector, which agrees with the
broader, graded official reward profile over an anomaly window.  The agreement
is stronger than qualitative resemblance.  The best fixed NAB family is the
TSS-shaped temporal weight vector with \(H=16\), and its weights satisfy
\(\omega_h=0.025+0.475\,u_h^3\), with \(u_h\) defined from the official NAB
scaled sigmoid in \eqref{eq:u_h}.  In other words, the winning temporal weight
vector has the same shape as the positive-reward part of the official NAB
utility curve, after rescaling and applying a power transform.  The sweep
therefore selected the candidate explicitly designed to match the utility shape,
not merely a generically decreasing weight vector.
Exathlon is different.  Its
AD4 objective is a front-biased range-overlap F-score defined on extended anomaly
intervals, not a localized one-sided reward window.  A short front-loaded temporal weight vector can still
be the best candidate within the tested \WSOL\ family, but it does not agree
with the full structure of the target utility.

Relative to Marchetti et al. \cite{Marchetti2024}, the theoretical novelty of
the present work is therefore not the introduction of weighted \SOL\ itself, but
its specialization, reformulation, and empirical assessment in the event-based
setting of temporally localized reward windows.

Several limitations remain.  First, the experiments keep the neural backbone
fixed within each dataset.  This design isolates the effect of the loss function:
\CE, \SOL, and \WSOL\ are compared under the same architecture and data pipeline.
The results should therefore be interpreted as controlled loss-level comparisons,
not as claims of state-of-the-art performance on the benchmarks.

Second, the choice of temporal weight vector is important.  In \WSOL, this vector is not
just a nuisance hyperparameter like batch size or dropout.  It encodes an
assumption about the deployment utility: which side of an event should be
rewarded, how wide the useful detection window is, and how quickly the reward
should increase or decay.  Validation can select a useful candidate from the
predefined families, but it may identify only a plausible utility shape rather than a
unique optimum.

Third, the present formulation represents event utility through weighted
confusion-matrix terms.  This captures temporal preferences inside differentiable
or piecewise differentiable skill-score surrogates, but it does not reproduce every detail of a full
benchmark scorer.  Future work could integrate richer event-level constraints,
repeated-alarm penalties, or differentiable approximations of complete benchmark
scoring rules.

\section{Conclusion}
\label{sec:conclusion}

This paper introduced a temporally localized specialization of the weighted
\SOL\ framework for anomaly-detection applications.  The central goal is to make
training sensitive to the timing of an alarm, not only to its pointwise
correctness.  \WSOL\ does this by applying a differentiable skill score to
expected confusion-matrix entries whose false-positive and false-negative terms
are modified by temporal weights through differentiable or piecewise
differentiable operations.  These weights encode the event utility
assumed by the task: alarms close to an event can be treated as less harmful than
ordinary false positives, and a missed event can be penalized less when the model
has already produced a sufficiently strong prior warning.  In this way, the loss
brings part of the thresholded event metric into the training objective.

The experiments show that this mechanism is useful when the benchmark utility
has a localized temporal structure.  On SKAB and NAB, where the target scores
reward alarms within specific time windows, \WSOL\ attains higher mean
validation-selected held-out utility than the matched cross-entropy baseline.  These gains show
that temporal weighting can supply information that ordinary pointwise labels do
not fully provide.  The Exathlon result gives the complementary lesson: \WSOL\
does not improve the range-level AD4 score, even though it improves mean pointwise
ranking metrics.  This separates better probability ranking from better event-level
utility and shows that the temporal weight vector must match the structure of the final
metric.

The practical conclusion is therefore application-dependent.  \WSOL\ should be
used when the deployment objective has a clear temporal reward window and that
window is not already encoded densely in the labels.  When anomalies are already
represented as long labeled ranges, or when the event scorer rewards a different
structure from the chosen temporal weight vector, the weighted objective may not improve the
final utility score over cross-entropy.  Its potential lies in settings where the
cost of an alarm depends strongly on its timing: in such cases, \WSOL\ provides a
principled way to encode this temporal utility directly into neural-network
training. 

\section*{Acknowledgements}

The authors acknowledge the support of GNCS -- National Group for Scientific
Computing, National Institute of Higher Mathematics ``Francesco Severi''
(INdAM).

\section*{Data availability}

The benchmark datasets analyzed in this study are publicly available from the
sources cited in the manuscript. The implementation is available at
\url{https://github.com/edoardolegnaro/ScoreOrientedLosses.git}.

\bibliographystyle{model1-num-names}
\bibliography{references}

\end{document}